\title[Algorithm-Relative Trajectory Valuation]{Algorithm-Relative Trajectory Valuation in Policy Gradient Control}
\author{
  \Name{Shihao Li} \Email{shihaoli01301@utexas.edu} \AND
  \Name{Jiachen Li} \Email{jiachenli@utexas.edu} \AND
  \Name{Jiamin Xu} \Email{jxu@utexas.edu} \AND
  \Name{Christopher Martin} \Email{cbmartin129@utexas.edu} \AND
  \Name{Wei Li} \Email{weiwli@utexas.edu} \AND
  \Name{Dongmei Chen} \Email{dmchen@utexas.edu}\\
  \addr The University of Texas at Austin
}
\begin{document}

\maketitle

\begin{abstract}
We study how trajectory value depends on the learning algorithm in policy-gradient control. Using Trajectory Shapley in an uncertain LQR, we find a robust \emph{negative} correlation between a trajectory's information content---Persistence of Excitation (PE)---and its marginal value under vanilla REINFORCE (e.g., $r\!\approx\!-0.38$). We prove a \emph{variance-mediated mechanism}: (i) for fixed energy, higher PE yields lower gradient variance; (ii) near saddle regions, higher variance increases the probability of escaping poor basins and thus raises marginal contribution. When the update is stabilized (state whitening or Fisher preconditioning), this variance channel is \emph{neutralized} and information content dominates, flipping the correlation \emph{positive} (e.g., $r\!\approx\!+0.29$). Hence, trajectory value is \emph{algorithm-relative}: it emerges from the interaction between data statistics and update dynamics. Experiments on LQR validate the two-step mechanism and the flip, and show that decision-aligned scores (Leave-One-Out) complement Shapley for pruning near the full set, while Shapley remains effective for identifying high-impact (and toxic) subsets.
\end{abstract}

\begin{keywords}
Data valuation, Shapley values, Policy gradient, Reinforcement learning, LQR control, Trajectory attribution, Persistence of excitation
\end{keywords}



\section{Introduction}
\label{sec:introduction}

Data-driven control via reinforcement learning (RL) has enabled impressive results in robotics and autonomy \cite{kober2013reinforcement, barto2021reinforcement}, yet the opacity of learned controllers complicates verification and trust in safety-critical applications \cite{molnar2025, wang2022artificial}. This motivates \emph{data attribution}: understanding which training experiences most influence a learned policy. While classical system identification focuses on parameter estimation accuracy, data-driven control requires understanding which data most improves the \emph{final controller performance}---a question of marginal contribution rather than information content alone. A natural intuition, rooted in system identification \cite{ljung1998system}, is that trajectory value scales with \emph{information content}, quantified by Persistence of Excitation (PE). Modern ML attribution tools---Influence Functions \cite{koh2017understanding, NEURIPS2020_e6385d39} and Shapley values \cite{shapley1953value, ghorbani2019data}---offer principled frameworks for quantifying these marginal performance contributions. However, policy-gradient (PG) learning introduces algorithm-data interactions absent in i.i.d.\ settings: temporal correlation, optimizer-dependent updates, and gradient variance \cite{wang2024capturing, 10.1145/3527448}.

\paragraph{Key finding and explanation.}
Using Trajectory Shapley to value data for a PG controller on uncertain LQR \cite{ozaslan2022computing, zhang2023revisiting, sun2024linear}, we discover a \emph{robust negative correlation} between PE and trajectory value under vanilla REINFORCE (Spearman $r \approx -0.38$). This contradicts the natural information-content intuition: high-PE trajectories, despite carrying more system knowledge, are valued \emph{less}. We prove this is an \emph{algorithmic artifact} arising from a variance-mediated mechanism. For fixed trajectory energy, higher PE yields \emph{lower} gradient variance (Theorem~\ref{thm:PE_to_Var}). Near saddle points in parameter space, higher variance \emph{increases} escape probability, raising marginal contribution (Theorem~\ref{thm:Var_to_value}). Thus vanilla REINFORCE favors high-variance updates from low-PE trajectories for exploration, while underutilizing stable, informative gradients from high-PE data.

\paragraph{Proof via algorithmic intervention.}
Stabilizing the optimizer (state whitening or natural gradient) \emph{neutralizes} the variance channel, allowing information content to dominate. The PE-value correlation \emph{flips positive} ($r \approx +0.29$; Theorem~\ref{thm:stabilized}), confirming our thesis: \textbf{trajectory value is algorithm-relative}---it emerges from data-algorithm interaction, not data alone.

\paragraph{Contributions.}
We make four main contributions. First, we demonstrate a robust negative correlation between PE and trajectory value under vanilla policy gradient, contradicting the natural intuition from system identification that more informative data should be more valuable. Second, we develop a theoretical framework establishing the variance-mediated mechanism: high PE leads to low variance, which in turn reduces marginal value (Theorems~\ref{thm:PE_to_Var}--\ref{thm:Var_to_value}). Third, we provide causal evidence via algorithmic stabilization (state whitening and natural gradient) showing that the negative correlation is an algorithmic artifact (Theorem~\ref{thm:stabilized}). Fourth, we derive practical insights for data curation: decision-aligned Leave-One-Out valuation outperforms classical Shapley for pruning decisions, while Shapley excels at identifying high-impact and toxic trajectory subsets.

\paragraph{Related work.}
Data valuation in supervised learning \cite{ghorbani2019data, NIPS2017_8a20a862, kwon2022betashapleyunifiednoisereduced} and RL influence methods \cite{hu2025snapshotinfluencelocaldata, deng2025survey} have not studied how PG algorithm design affects trajectory value. Work on trajectory importance \cite{schaul2015prioritized, munos2016safe} focuses on sampling efficiency rather than attribution. We bridge control theory (PE, LQR) and ML (Shapley, gradient dynamics) to reveal algorithm-relativity in data valuation.

\paragraph{Organization.}
Section~\ref{sec:preliminaries} formalizes the LQR setting and Shapley valuation framework. Section~\ref{sec:theory} develops our theoretical framework in three steps: proving high PE yields low variance, showing high variance increases marginal value near saddles, and demonstrating that stabilization neutralizes this mechanism. Section~\ref{sec:experiments} validates these predictions experimentally and derives practical curation insights. Section~\ref{sec:discussion} discusses implications and limitations.


\section{Problem Formulation and Background}
\label{sec:preliminaries}

We study trajectory valuation for policy-gradient learning on the uncertain LQR problem. This section establishes the setting, introduces Trajectory Shapley as our valuation tool, and defines key quantities (PE, gradient variance).

\subsection{Setting: Uncertain LQR with Policy Gradient}

\paragraph{System and objective.}
Consider discrete-time linear system
\begin{equation}
x_{k+1} = Ax_k + Bu_k + w_k, \quad w_k \sim \mathcal{N}(0, \Sigma_w),
\label{eq:system}
\end{equation}
with unknown $(A,B)$ and stage cost $\ell_k := x_k^\top Q x_k + u_k^\top R u_k$ ($Q \succeq 0$, $R \succ 0$). We minimize $J(K) := \mathbb{E}[\sum_{k=0}^{H-1} \ell_k]$.

\paragraph{Stochastic policy and REINFORCE.}
The agent uses linear-Gaussian policy $u_k = -Kx_k + \varepsilon_k$ with $\varepsilon_k \sim \mathcal{N}(0, \sigma_a^2 I_m)$. For trajectory $\tau = \{(x_k, u_k, r_k)\}_{k=0}^{H-1}$ with $r_k := -\ell_k$, the REINFORCE estimator is
\begin{equation}
\hat{\nabla}_K J(\tau) = -\frac{1}{\sigma_a^2} \sum_{k=0}^{H-1} G_k \, \varepsilon_k x_k^\top,
\label{eq:reinforce_grad}
\end{equation}
where $G_k := \sum_{j=k}^{H-1} r_j$. This unbiased but high-variance estimator makes learning sensitive to trajectory statistics.

\subsection{Trajectory Shapley Valuation}

\paragraph{Characteristic function.}
Given dataset $\mathcal{D} = \{\tau_1, \ldots, \tau_N\}$, training on $S \subseteq \mathcal{D}$ for $T$ steps yields final gain $K_T(S)$. Our characteristic function is
\begin{equation}
v(S) := -J(K_T(S)).
\label{eq:char_function}
\end{equation}

\paragraph{Shapley value.}
Let $\Pi_N$ denote permutations of $\mathcal{D}$ and $\mathrm{Pred}_\pi(i)$ be trajectories preceding $\tau_i$ in permutation $\pi$. The Shapley value
\begin{equation}
\phi_i = \mathbb{E}_{\pi \sim \Pi_N} \Big[ v(\mathrm{Pred}_\pi(i) \cup \{\tau_i\}) - v(\mathrm{Pred}_\pi(i)) \Big]
\label{eq:shapley}
\end{equation}
measures expected marginal contribution, satisfying efficiency and symmetry. We approximate \eqref{eq:shapley} via Monte Carlo sampling.

\paragraph{Leave-One-Out (LOO) valuation.}
For comparison, the Leave-One-Out score is
\begin{equation}
\text{LOO}_i := v(\mathcal{D}) - v(\mathcal{D} \setminus \{\tau_i\}),
\label{eq:loo}
\end{equation}
measuring marginal contribution from the full dataset. Unlike Shapley, LOO is grand-coalition specific and computationally cheaper ($O(N)$ vs $O(N \cdot 2^N)$).

\subsection{Trajectory Characteristics}

\paragraph{Persistence of Excitation (PE).}
Define augmented vector $z_k := [x_k^\top, u_k^\top]^\top$ and information matrix $\mathcal{I}_\tau := \sum_{k=0}^{H-1} z_k z_k^\top$. PE is $\mathrm{PE}(\tau) := \lambda_{\min}(\mathcal{I}_\tau)$, measuring directional richness. High PE indicates balanced excitation across modes.

\paragraph{Energy and state covariance.}
Trajectory energy is $E(\tau) := \mathrm{tr}(\mathcal{I}_\tau) = \sum_{k=0}^{H-1} (\|x_k\|^2 + \|u_k\|^2)$ and state covariance is $S_x(\tau) := \sum_{k=0}^{H-1} x_k x_k^\top$. \textbf{All comparisons condition on fixed energy} $E(\tau) = E_0$ to isolate directional richness from magnitude.

\paragraph{Gradient variance.}
From \eqref{eq:reinforce_grad}, the gradient covariance has structure
\begin{equation}
\mathrm{Cov}(\hat{\nabla}_K J(\tau)) \propto \sum_{k=0}^{H-1} \mathbb{E}[G_k^2] \, (x_k x_k^\top) \otimes I_m,
\label{eq:grad_cov}
\end{equation}
relating variance to state covariance $S_x(\tau)$.

\paragraph{The information-value intuition.}
Classical system identification suggests value should scale with PE. Under this intuition, high-PE trajectories should receive higher Shapley values. We test whether $\mathrm{PE}(\tau)$ correlates positively with $\phi_i$ for vanilla policy gradient.


\section{Theoretical Analysis}
\label{sec:theory}

We formalize the variance-mediated mechanism linking PE to trajectory value. Our analysis proceeds in three steps: (1) high PE implies low gradient variance, (2) near saddle points, higher variance increases escape probability and marginal contribution, (3) stabilizing the optimizer neutralizes the variance channel and flips the correlation.

\paragraph{Key insight: variance aids exploration near saddles.}
Policy gradient landscapes for LQR often exhibit saddle points where gradient descent can stagnate. In these regions, gradient noise becomes beneficial: higher variance increases the probability of escaping poor local basins, analogous to simulated annealing or stochastic gradient descent with large learning rates. This exploration benefit can outweigh the value of information content, creating the observed negative PE-value correlation. When the optimizer is stabilized (via state whitening or natural gradient), this variance-driven exploration mechanism is neutralized, allowing information content to dominate.

\subsection{Assumptions}

We make standard regularity assumptions: the current gain $K$ stabilizes the closed-loop system \eqref{eq:system} with finite horizon $H < \infty$ \textbf{(A1)}; return-to-go is bounded as $|G_k(\tau)| \le \alpha + \beta \sum_{t=k}^{H-1} (\|x_t\|^2 + \|u_t\|^2)$ for constants $\alpha, \beta > 0$ \textbf{(A2)}; when comparing trajectories, we condition on fixed energy $E(\tau) = E_0$ \textbf{(A3)}; learning follows small-step stochastic approximation $K_{t+1} = K_t - \eta(\nabla J(K_t) + \xi_t(S))$ with $0 < \eta \ll 1$ \textbf{(A4)}; locally, the parameter space has two basins separated by a saddle \textbf{(A5)}; for algorithmic ablation, we use state whitening ($x'_k = Wx_k$ with $\mathbb{E}[x'_k {x'_k}^\top] = I$) or natural gradient preconditioning \textbf{(A6)}.

\subsection{Main Results}

\subsubsection{Step 1: High PE Implies Low Gradient Variance}

\begin{theorem}[High PE Yields Low Gradient Variance]
\label{thm:PE_to_Var}
Under assumptions \textbf{(A1)}-\\
\textbf{(A3)}, there exists $C > 0$ such that
\begin{equation}
\lambda_{\max}(\mathrm{Var}(\hat{\nabla}_K J(\tau))) \le C(E_0 - (n+m-1)\mathrm{PE}(\tau)).
\label{eq:pe_var_bound}
\end{equation}
Thus, for fixed energy, gradient variance is monotonically decreasing in PE.
\end{theorem}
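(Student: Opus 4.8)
The plan is to bound $\lambda_{\max}(\mathrm{Var}(\hat{\nabla}_K J(\tau)))$ by a chain of three inequalities: first reduce the gradient covariance to the maximum eigenvalue of a state-weighted second-moment matrix; then replace the $\tau$-dependent weights by a single constant using the energy constraint; and finally turn a bound on $\lambda_{\max}(S_x(\tau))$ into the stated affine bound in $E_0$ and $\mathrm{PE}(\tau)$ by elementary spectral bookkeeping on $\mathcal{I}_\tau$.

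Step 1. Starting from \eqref{eq:grad_cov}, write $\mathrm{vec}(\hat{\nabla}_K J(\tau)) = -\sigma_a^{-2}\sum_k G_k\,(x_k\otimes\varepsilon_k)$. Using that $\varepsilon_k$ is zero-mean and independent of $x_k$, the term $\sigma_a^{-2}\sum_k \mathbb{E}[G_k^2]\,(x_kx_k^\top)\otimes I_m$ is the structural part of the covariance; since the eigenvalues of $M\otimes I_m$ are those of $M$ each with multiplicity $m$, this gives $\lambda_{\max}(\mathrm{Var}(\hat{\nabla}_K J(\tau)))\le c_0\,\lambda_{\max}\!\big(\sum_k\mathbb{E}[G_k^2]\,x_kx_k^\top\big)$ for a constant $c_0$ absorbing $\sigma_a$ and the proportionality factor. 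Step 2. By (A2) and (A3), $|G_k(\tau)|\le\alpha+\beta\sum_{t\ge k}(\|x_t\|^2+\|u_t\|^2)\le\alpha+\beta E_0=:M$, so $\mathbb{E}[G_k^2]\le M^2$ uniformly in $k$ and $\tau$; since each $x_kx_k^\top\succeq 0$, we obtain $\sum_k\mathbb{E}[G_k^2]\,x_kx_k^\top\preceq M^2\sum_k x_kx_k^\top=M^2 S_x(\tau)$, hence $\lambda_{\max}(\mathrm{Var}(\hat{\nabla}_K J(\tau)))\le c_0 M^2\,\lambda_{\max}(S_x(\tau))$.

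Step 3 (the key step). Relate $S_x(\tau)$ to the spectrum of $\mathcal{I}_\tau$: for any unit vector $v\in\mathbb{R}^n$, setting $\tilde{v}:=[v^\top,\,0^\top]^\top\in\mathbb{R}^{n+m}$, we have $v^\top S_x(\tau)v=\sum_k(v^\top x_k)^2=\tilde{v}^\top\mathcal{I}_\tau\tilde{v}\le\lambda_{\max}(\mathcal{I}_\tau)$, so $\lambda_{\max}(S_x(\tau))\le\lambda_{\max}(\mathcal{I}_\tau)$. Writing the $n+m$ eigenvalues of $\mathcal{I}_\tau$ as $\lambda_1\ge\cdots\ge\lambda_{n+m}=\mathrm{PE}(\tau)$ and using $\sum_{i=1}^{n+m}\lambda_i=\mathrm{tr}(\mathcal{I}_\tau)=E_0$ together with $\lambda_i\ge\mathrm{PE}(\tau)$ for every $i$, we get $\lambda_1=E_0-\sum_{i=2}^{n+m}\lambda_i\le E_0-(n+m-1)\mathrm{PE}(\tau)$. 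Chaining the three estimates yields
\[
\lambda_{\max}(\mathrm{Var}(\hat{\nabla}_K J(\tau)))\;\le\;C\,\big(E_0-(n+m-1)\mathrm{PE}(\tau)\big),\qquad C:=c_0 M^2=c_0(\alpha+\beta E_0)^2 ,
\]
where $C$ depends only on the fixed problem data (the cost matrices and horizon through $\alpha,\beta$, the exploration scale $\sigma_a$, and $E_0$) and not on the directional structure of $\tau$; the right-hand side is affine and strictly decreasing in $\mathrm{PE}(\tau)$, which is the claim.

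The main obstacle is making the ``$\propto$'' in \eqref{eq:grad_cov} rigorous. Because $G_k$ depends on the innovations $\varepsilon_j$ with $j\ge k$, the exact covariance of the REINFORCE estimator also carries off-diagonal terms of the form $\mathbb{E}[G_kG_l\,(x_kx_l^\top)\otimes(\varepsilon_k\varepsilon_l^\top)]$ and a mean-outer-product correction, and these do not vanish identically. The remedy is to use (A1) to guarantee that all relevant second moments are finite and then Cauchy--Schwarz together with the boundedness from (A2) to dominate those terms by the diagonal one, at the cost of enlarging $c_0$; this preserves the clean estimate $\lambda_{\max}(\mathrm{Var}(\hat{\nabla}_K J(\tau)))\le c_0\,\lambda_{\max}(\sum_k\mathbb{E}[G_k^2]\,x_kx_k^\top)$. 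Everything downstream---PSD monotonicity, the principal-submatrix eigenvalue inequality, and the trace/$\lambda_{\min}$ accounting---is routine.
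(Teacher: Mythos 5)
Your proof is correct and follows essentially the same route as the paper's: bound $\lambda_{\max}(\mathrm{Var}(\hat{\nabla}_K J(\tau)))$ by $\lambda_{\max}(S_x(\tau))$ using bounded returns, then use the principal-submatrix eigenvalue inequality and the trace--$\lambda_{\min}$ accounting on $\mathcal{I}_\tau$ to get the affine bound $E_0-(n+m-1)\mathrm{PE}(\tau)$. The only differences are cosmetic (you bound $\mathbb{E}[G_k^2]$ via (A2)--(A3) rather than Cauchy--Schwarz plus stability moments, and you prove the submatrix inequality variationally rather than citing interlacing), and your explicit treatment of the off-diagonal covariance terms is if anything more careful than the paper's, which silently drops them.
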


The proof proceeds by bounding gradient variance via state covariance (using \eqref{eq:grad_cov} and bounded returns), then showing PE controls the maximum eigenvalue of state covariance via trace-eigenvalue inequalities and Cauchy interlacing. The full proof is in Appendix~\ref{app:proofs}.

\subsubsection{Step 2: High Variance Increases Marginal Value Near Saddles}

\begin{theorem}[High Variance Yields High Marginal Value]
\label{thm:Var_to_value}
Under assumptions \\
\textbf{(A4)}-\textbf{(A5)}, fix training budget $T$ and step size $0 < \eta \ll 1$. Let $S$ be any coalition and $\tau$ a trajectory such that adding $\tau$ increases the projected gradient variance: $\sigma_{S \cup \{\tau\}}^2 > \sigma_S^2$. Then
\begin{equation}
\mathbb{E}[v(S \cup \{\tau\})] > \mathbb{E}[v(S)].
\label{eq:variance_value}
\end{equation}
Consequently, the Shapley marginal contribution is strictly increasing in the variance contributed by $\tau$.
\end{theorem}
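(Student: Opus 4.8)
The plan is to localize the stochastic-approximation dynamics \textbf{(A4)} near the saddle guaranteed by \textbf{(A5)}, reduce the basin-selection question to a scalar first-passage problem along the unstable direction, and show that the probability of committing to the low-cost basin within the budget $T$ is strictly increasing in the projected gradient variance $\sigma_S^2$. \emph{Step 1 (local reduction).} Let $K_\star$ be the saddle and $H_\star := \nabla^2 J(K_\star)$, which by \textbf{(A5)} has an escape direction $e$ with eigenvalue $-\mu < 0$ separating the two basins and is positive-definite on $e^\perp$. Writing $K_t = K_\star + y_t e + z_t$ with $z_t \perp e$, the update \eqref{eq:reinforce_grad} under \textbf{(A4)} linearizes along $e$ to
\[
y_{t+1} = (1+\eta\mu)\,y_t + \eta\,\zeta_t + \eta\,\rho_t, \qquad \mathrm{Var}(\zeta_t) \propto \sigma_S^2,
\]
where $\zeta_t$ is the projection of the coalition noise $\xi_t(S)$ onto $e$, the proportionality constant absorbs the fixed policy/horizon factors appearing in \eqref{eq:grad_cov}, and $\rho_t$ collects the higher-order remainder and the coupling to the stable modes $z_t$. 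The hypothesis $\sigma_{S\cup\{\tau\}}^2 > \sigma_S^2$ then says exactly that adding $\tau$ strictly raises the variance of this scalar increment.

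\emph{Step 2 (first-passage / basin selection).} Solving the linear recursion, $y_T = (1+\eta\mu)^T y_0 + \eta\sum_{t<T}(1+\eta\mu)^{T-1-t}\zeta_t + (\text{remainder})$, where the stochastic term is a sum of independent zero-mean increments with variance $\Theta\!\big(\eta\,\sigma_S^2\,(1+\eta\mu)^{2T}\big)$. Dividing through by the amplification factor $(1+\eta\mu)^T$, the sign of $y_T$ — i.e.\ which basin the run commits to — is governed by $y_0 + N_S$, with $N_S$ approximately (by a CLT with Berry--Esseen-type control in $\eta$) Gaussian of variance proportional to $\sigma_S^2$. Hence $p(S) := \Pr[\text{commit to low-cost basin}]$ equals, up to $o_\eta(1)$, $\Phi\!\big(-y_0/(\kappa\,\sigma_S)\big)$ when $y_0$ lies on the unfavorable side of the separatrix, which is strictly increasing in $\sigma_S^2$. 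To avoid leaning on the Gaussian heuristic one can instead couple the two runs: realize the $S\cup\{\tau\}$ noise along $e$ as the $S$ noise plus an independent symmetric increment and invoke a reflection/anti-concentration argument showing that added symmetric noise strictly increases the crossing probability from the unfavorable side. The remainder $\rho_t$ is controlled by confining the analysis to a neighborhood small enough that $\|\rho_t\| = o(\mu\cdot\mathrm{dist})$ while $T$ is large enough (for fixed $\eta \ll 1$) that a definite commitment occurs; a standard stopping-time/comparison argument transfers the monotonicity from the linearized to the true dynamics.

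\emph{Step 3 (value and Shapley).} By \textbf{(A5)} the low-cost basin has strictly smaller limiting $J$, so $\mathbb{E}[v(S)] = v_{\mathrm{good}}\,p(S) + v_{\mathrm{bad}}\,(1-p(S)) + o_\eta(1)$ with $v_{\mathrm{good}} > v_{\mathrm{bad}}$; since $p$ is strictly increasing in the projected variance and $\sigma_{S\cup\{\tau\}}^2 > \sigma_S^2$, we get $\mathbb{E}[v(S\cup\{\tau\})] > \mathbb{E}[v(S)]$, which is \eqref{eq:variance_value}. For the Shapley statement, this inequality holds coalition-by-coalition for every $S$ to which $\tau$ adds projected variance; averaging the marginal gains over $\pi \sim \Pi_N$ in \eqref{eq:shapley} preserves strictness, and comparing two trajectories one of which dominates the other in variance contributed to every coalition yields the claimed monotonicity of $\phi$ in contributed variance.

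The main obstacle is Step 2: turning "more noise aids escape" into a rigorous, strictly monotone dependence of the basin-selection probability on $\sigma_S^2$. The difficulty is that added noise is double-edged — it can also push a run toward the high-cost basin — so the argument must exploit the structural asymmetry that runs starting on the unfavorable side of the separatrix have crossing probability below $1/2$, a regime in which (by log-concavity of the Gaussian tail, or equivalently a reflection-principle coupling) symmetric perturbations strictly increase the crossing probability. Controlling the nonlinear remainder $\rho_t$ and the stable-mode coupling uniformly over the horizon $[0,T]$, for $\eta$ small but fixed, is the accompanying technical burden.
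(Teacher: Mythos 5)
Your proposal reaches the same conclusion as the paper but by a genuinely different route for the key step. The paper's proof (Lemma~\ref{lem:escape_full}) passes to the continuous-time diffusion limit $\mathrm{d}S_t = -f'(S_t)\mathrm{d}t + \sigma\,\mathrm{d}W_t$, writes the exit probability as the solution of the boundary value problem $\tfrac{\sigma^2}{2}p'' - f'p' = 0$ on $[0,L]$, differentiates in $\sigma^2$, and applies a maximum principle to $q=\partial p/\partial\sigma^2$ using an asserted concavity $p''<0$ in the interior; it then performs the same good-basin/poor-basin decomposition of $\mathbb{E}[J(K_T(S))]$ that you give in your Step~3. You instead stay with the discrete recursion, linearize at the saddle using the unstable eigenvalue $-\mu$, solve the resulting linear recursion explicitly, normalize by the amplification factor $(1+\eta\mu)^T$, and reduce basin commitment to the sign of $y_0+N_S$ with $\mathrm{Var}(N_S)\propto\sigma_S^2$, so that monotonicity in $\sigma_S^2$ follows from the explicit form $\Phi(-|y_0|/(\kappa\sigma_S))$ or from a symmetric-noise coupling. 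Your route buys an explicit, checkable statement of \emph{when} more noise helps — the run must start on the unfavorable side of the separatrix, i.e.\ escape probability below $1/2$ — a condition the paper's proof never states and which is in fact needed (the paper's assertion that $p$ is ``approximately linear\ldots concave up\ldots thus $p''(s)<0$'' is internally inconsistent and is the weakest link in its argument; for zero drift $p$ is exactly linear and $q\equiv 0$). The price is that your argument establishes the theorem only under that additional initialization hypothesis, and your CLT step is not quite right as stated: the weighted sum $\sum_t(1+\eta\mu)^{T-1-t}\zeta_t$ is dominated by its first few terms, so Gaussianity of the limit does not follow from Berry--Esseen; you should lean on the coupling/anti-concentration variant you already sketch (or assume Gaussian per-step noise, which the diffusion approximation in the paper implicitly does). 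With either fix, your Steps~1 and~3 match the paper's structure and the overall argument is sound at the same level of rigor as the published proof.
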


The proof uses diffusion approximations for small-step stochastic approximation, showing that higher noise increases the probability of escaping poor basins. Once in the good basin, gradient descent decreases $J$, increasing $v = -J$. The full proof is in Appendix~\ref{app:proofs}.

\paragraph{Consequence for vanilla REINFORCE.}
Theorems~\ref{thm:PE_to_Var} and \ref{thm:Var_to_value} together imply a \emph{negative correlation} between PE and Shapley value:
\begin{equation}
\text{high PE} \Rightarrow \text{low variance} \Rightarrow \text{low marginal value} \Rightarrow \mathrm{corr}(\mathrm{PE}, \phi) < 0.
\label{eq:neg_correlation}
\end{equation}

\subsubsection{Step 3: Stabilization Neutralizes Variance and Flips Correlation}

\begin{theorem}[Stabilization Reverses the PE-Value Correlation]
\label{thm:stabilized}
Under assumptions \textbf{(A1)}-\textbf{(A3)} and \textbf{(A6)}, stabilized policy gradient yields a \emph{positive} correlation between $\mathrm{PE}(\tau)$ and $\phi(\tau)$.

For whitening or natural gradient:
\begin{enumerate}[label=(\roman*), leftmargin=*, itemsep=1pt, topsep=2pt]
\item \textbf{Whitening.} With $x'_k = Wx_k$ satisfying $\mathbb{E}[x'_k {x'_k}^\top] = I$, gradient variance becomes uniform: $C_1 I \preceq \mathrm{Var}(\hat{\nabla}_K J(\tau)) \preceq C_2 I$ for constants independent of $\tau$, and expected one-step improvement satisfies
\begin{equation}
\mathbb{E}[J(K_{t+1}) - J(K_t) \mid K_t] \le -\eta \gamma \lambda_{\min}(\mathbb{E}_{\tau \sim S}[\mathcal{I}_\tau]) + O(\eta^2),
\label{eq:whitening_progress}
\end{equation}
for some $\gamma > 0$.

\item \textbf{Natural gradient.} With Fisher matrix $F_S := \mathbb{E}_{\tau \sim S}[\sum_{k=0}^{H-1} \mathbb{E}[(\nabla_K \log \pi)(\nabla_K \log \pi)^\top]]$, the update $K_{t+1} = K_t - \eta F_S^{-1} \hat{\nabla}_K J$ satisfies $F_S \succeq c \mathbb{E}_{\tau \sim S}[\mathcal{I}_\tau]$ for some $c > 0$, and
\begin{equation}
\mathbb{E}[J(K_{t+1}) - J(K_t) \mid K_t] \le -\eta \|\nabla J(K_t)\|_{F_S^{-1}}^2 + O(\eta^2).
\label{eq:npg_progress}
\end{equation}
\end{enumerate}

In both cases, progress is governed by information content $\mathcal{I}_\tau$ rather than gradient variance, yielding $\mathrm{corr}(\mathrm{PE}, \phi) > 0$.
\end{theorem}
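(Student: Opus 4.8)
The plan is to handle the two stabilization schemes of \textbf{(A6)} in parallel and, in each case, show that the $\tau$-dependence of a single update collapses onto the coalition's aggregate information matrix $\mathbb{E}_{\tau\sim S}[\mathcal{I}_\tau]$ rather than onto the conditioning of $S_x(\tau)$ that drove Theorems~\ref{thm:PE_to_Var}--\ref{thm:Var_to_value}. I would then propagate a $\lambda_{\min}$-monotonicity of the per-step progress through the $T$-step training map into $v(S)=-J(K_T(S))$, and finally observe that inserting a high-PE trajectory raises $\lambda_{\min}(\mathbb{E}_{\tau\sim S}[\mathcal{I}_\tau])$, so that the Shapley marginal becomes increasing in $\mathrm{PE}$.

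\emph{Whitening.} First I would push $x'_k=Wx_k$ through \eqref{eq:reinforce_grad}--\eqref{eq:grad_cov}, noting the policy stays linear-Gaussian with gain $K'=KW^{-1}$ and $\mathrm{Cov}(\hat\nabla_{K'}J(\tau))\propto\sum_k\mathbb{E}[G_k^2]\,(x'_k{x'_k}^\top)\otimes I_m$. Assumption~\textbf{(A2)} at fixed energy~\textbf{(A3)} gives $\mathbb{E}[G_k^2]\le(\alpha+\beta E_0)^2$, and since whitening enforces $\mathbb{E}[x'_k{x'_k}^\top]=I$ while \textbf{(A1)} (stabilizing $K$, finite $H$) makes the per-trajectory sample $\sum_k x'_k{x'_k}^\top$ concentrate in a band $[c_1H,c_2H]I$ with constants not depending on which $\tau$ is drawn, we obtain $C_1I\preceq\mathrm{Cov}(\hat\nabla_{K'}J(\tau))\preceq C_2I$: the variance channel is neutralized. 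For the progress bound, a Taylor expansion of $J$ along the small step of~\textbf{(A4)} together with unbiasedness gives $\mathbb{E}[J(K_{t+1})-J(K_t)\mid K_t]=-\eta\,\langle\nabla_{K'}J(K_t),\bar g_S\rangle+O(\eta^2)$ on the stabilizing sublevel set, where $\bar g_S$ is the coalition-averaged whitened descent direction and $L$-smoothness of $J$ bounds the remainder. Using the LQR gradient factorization $\nabla_{K'}J\propto\big((R+B^\top P_{K'}B)K'-B^\top P_{K'}A\big)\cdot\mathbb{E}[x'{x'}^\top]$, whitening removes the conditioning of each individual $S_x(\tau)$, so the only remaining bottleneck to reproducing $\nabla_{K'}J$ in every direction of $K$ is whether $S$ excites that direction; this yields $\langle\nabla_{K'}J,\bar g_S\rangle\ge\gamma\,\lambda_{\min}(\mathbb{E}_{\tau\sim S}[\mathcal{I}_\tau])$ after absorbing the suboptimality gap into $\gamma$ over the fixed window $T$, i.e.\ \eqref{eq:whitening_progress}.

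\emph{Natural gradient.} Here I would first compute the Fisher from the policy directly: $\nabla_K\log\pi(u_k\mid x_k)=\sigma_a^{-2}\varepsilon_k x_k^\top$, hence by independence of $\varepsilon_k$ and $x_k$ the per-step, per-trajectory Fisher is $\sigma_a^{-4}\,\mathbb{E}[(\varepsilon_k x_k^\top)(\varepsilon_k x_k^\top)^\top]=\sigma_a^{-2}\,\mathbb{E}[x_kx_k^\top]\otimes I_m$, so $F_S=\sigma_a^{-2}\,\mathbb{E}_{\tau\sim S}[S_x(\tau)]\otimes I_m$. Writing $z_k=[I;\,-K]x_k+[0;\,\varepsilon_k]$ gives $\mathcal{I}_\tau=\sum_k z_kz_k^\top\preceq c'\big(S_x(\tau)+\sigma_a^2H\,I\big)$ with $c'$ depending on $\|K\|$, whence $F_S\succeq c\,\mathbb{E}_{\tau\sim S}[\mathcal{I}_\tau]$ for a suitable $c>0$. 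A Taylor expansion of $J$ along $K_{t+1}=K_t-\eta F_S^{-1}\hat\nabla_KJ$ then gives, by unbiasedness, the first-order term $-\eta\,\nabla J^\top F_S^{-1}\nabla J=-\eta\|\nabla J\|_{F_S^{-1}}^2$, which enjoys the conditioning-free gradient domination of natural policy gradient, $\|\nabla J\|_{F_S^{-1}}^2\ge\mu'\,(J(K_t)-J^\star)$ with $\mu'$ independent of $\lambda_{\min}(F_S)$; the second-order term is $O(\eta^2)\,\mathrm{tr}(\nabla^2J\,F_S^{-1})$ because $\mathrm{Cov}(\hat\nabla_KJ)\preceq B'F_S$ by \eqref{eq:grad_cov} and~\textbf{(A2)}, and by $F_S\succeq c\,\mathbb{E}_{\tau\sim S}[\mathcal{I}_\tau]$ this variance penalty is at most $O(\eta^2)\,L_J/(c\,\lambda_{\min}(\mathbb{E}_{\tau\sim S}[\mathcal{I}_\tau]))$ --- decreasing in $\mathrm{PE}$. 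Together these give \eqref{eq:npg_progress}.

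\emph{From one-step progress to the sign flip, and the main difficulty.} In both schemes a larger $\lambda_{\min}(\mathbb{E}_{\tau\sim S}[\mathcal{I}_\tau])$ guarantees a larger per-step decrease of $J$ --- directly under whitening, and via the shrinking $O(\eta^2)$ penalty on top of a conditioning-free base rate under NPG --- so iterating over the budget $T$ makes $v(S)=-J(K_T(S))$ nondecreasing in $\lambda_{\min}(\mathbb{E}_{\tau\sim S}[\mathcal{I}_\tau])$. I would then argue that, averaged over coalitions, inserting $\tau$ raises this aggregate smallest eigenvalue by an amount monotone in $\lambda_{\min}(\mathcal{I}_\tau)=\mathrm{PE}(\tau)$ --- a high-PE trajectory tends to fill the coalition's weakest-excited direction --- so the marginal contribution in \eqref{eq:shapley} is on average increasing in $\mathrm{PE}(\tau_i)$; averaging over $\pi$ yields $\phi_i=g(\mathrm{PE}(\tau_i))+(\text{mean-zero residual})$ with $g$ nondecreasing, hence $\mathrm{corr}(\mathrm{PE},\phi)>0$, proving both (i) and (ii). I expect this last step to be the main obstacle: the map $\tau\mapsto\lambda_{\min}(\mathbb{E}_{\tau'\sim S\cup\{\tau\}}[\mathcal{I}_{\tau'}])$ is only monotone in $\mathrm{PE}(\tau)$ in an averaged/high-probability sense, and pushing it through the nonconvex LQR landscape requires the gradient-domination machinery together with control of the gap between the empirical $\mathbb{E}_{\tau\sim S}[\mathcal{I}_\tau]$ and the population state-action covariance that actually governs the PL constant --- and then converting a quantitative ``$v$ increasing in $\lambda_{\min}$'' statement into a claim about the \emph{sign} of a Spearman correlation over the trajectory distribution. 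By comparison, the Fisher computation and the two Taylor expansions are bookkeeping.
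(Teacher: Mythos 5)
Your proposal is correct and follows essentially the same route as the paper's proof: the same covariance computation showing whitening makes $\mathrm{Var}(\hat{\nabla}_K J(\tau))$ uniform in $\tau$, the same Fisher identity $F_S = \sigma_a^{-2}\,\mathbb{E}_{\tau\sim S}[S_x(\tau)]\otimes I_m$, a lower bound $F_S \succeq c\,\mathbb{E}_{\tau\sim S}[\mathcal{I}_\tau]$ (you obtain it from the decomposition $z_k=[I;\,-K]x_k+[0;\,\varepsilon_k]$, the paper from Cauchy interlacing of the principal submatrix $S_x(\tau)$ of $\mathcal{I}_\tau$ --- both serve), and a Fazel-style gradient-dominance inequality for the one-step progress bounds. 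The step you flag as the main obstacle --- converting ``per-step progress is monotone in $\lambda_{\min}(\mathbb{E}_{\tau\sim S}[\mathcal{I}_\tau])$'' into a sign for the correlation between $\mathrm{PE}(\tau)$ and $\phi(\tau)$ --- is indeed the weakest link, and the paper treats it no more rigorously than you do, simply asserting that higher-PE trajectories contribute more to coalition value.
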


Proofs follow from repeating the variance analysis with stabilized updates and showing progress bounds depend on $\lambda_{\min}(\mathbb{E}[\mathcal{I}_\tau])$. Full details are in Appendix~\ref{app:proofs}.

\begin{corollary}[Stabilization Flip]
\label{cor:flip}
Under the conditions of Theorem~\ref{thm:stabilized},
\begin{equation}
\mathrm{corr}(\mathrm{PE}(\tau), \phi(\tau)) > 0 \quad \text{(stabilized PG)}.
\label{eq:pos_correlation}
\end{equation}
\end{corollary}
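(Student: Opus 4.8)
The plan is to obtain Corollary~\ref{cor:flip} almost directly from the per-step progress bounds of Theorem~\ref{thm:stabilized}, by chaining three elementary observations: (i) adding a high-PE trajectory to any coalition $S$ raises the information-content quantity $\lambda_{\min}\big(\mathbb{E}_{\tau'\sim S}[\mathcal{I}_{\tau'}]\big)$ that, after stabilization, governs progress; (ii) telescoping the one-step bounds over the training budget $T$ converts this into a statement that the marginal contribution $v(S\cup\{\tau\}) - v(S)$ is, to leading order in $\eta$, monotone in $\mathrm{PE}(\tau)$; and (iii) averaging over insertion orders as in \eqref{eq:shapley} preserves monotonicity, so $\phi(\tau)$ is nondecreasing in $\mathrm{PE}(\tau)$ and hence positively correlated with it.

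For step (i), write $\bar{\mathcal{I}}_S := \tfrac{1}{|S|}\sum_{\tau'\in S}\mathcal{I}_{\tau'}$, so that $\bar{\mathcal{I}}_{S\cup\{\tau\}} = \tfrac{|S|}{|S|+1}\bar{\mathcal{I}}_S + \tfrac{1}{|S|+1}\mathcal{I}_\tau$. Since $\lambda_{\min}$ is concave on the PSD cone and $\lambda_{\min}(\mathcal{I}_\tau) = \mathrm{PE}(\tau)$ by definition,
\begin{equation}
\lambda_{\min}\big(\bar{\mathcal{I}}_{S\cup\{\tau\}}\big) \;\ge\; \tfrac{|S|}{|S|+1}\,\lambda_{\min}\big(\bar{\mathcal{I}}_S\big) + \tfrac{1}{|S|+1}\,\mathrm{PE}(\tau),
\end{equation}
so the increment $\lambda_{\min}(\bar{\mathcal{I}}_{S\cup\{\tau\}}) - \lambda_{\min}(\bar{\mathcal{I}}_S)$ is bounded below by an affine increasing function of $\mathrm{PE}(\tau)$; under the fixed-energy normalization \textbf{(A3)} ($\mathrm{tr}\,\mathcal{I}_\tau = E_0$ for all $\tau$), this ordering reflects only directional richness. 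By the whitening normalization in \textbf{(A6)}, the improvement constant in \eqref{eq:whitening_progress} is proportional to $\lambda_{\min}(\bar{\mathcal{I}}_S)$; in the natural-gradient case, $F_S \succeq c\,\bar{\mathcal{I}}_S$ plays the analogous role in \eqref{eq:npg_progress}. Crucially, in both cases the variance channel exploited by Theorem~\ref{thm:Var_to_value} is switched off — gradient variance is sandwiched between $\tau$-independent constants, respectively absorbed by the Fisher preconditioner — so the residual dependence of progress on $\tau$ runs through $\mathcal{I}_\tau$ alone, reversing the sign chain of \eqref{eq:neg_correlation}.

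For step (ii), summing \eqref{eq:whitening_progress} over $t=0,\dots,T-1$ gives $J(K_0) - J(K_T(S)) \ge \eta\gamma T\,\lambda_{\min}(\bar{\mathcal{I}}_S) - O(T\eta^2)$, so by \eqref{eq:char_function} the leading-order-in-$\eta$ part of $v(S)$ is $\eta\gamma T\,\lambda_{\min}(\bar{\mathcal{I}}_S)$; the natural-gradient case is identical after substituting the $F_S \succeq c\,\bar{\mathcal{I}}_S$ bound into \eqref{eq:npg_progress}. Combining with step (i), the marginal $\mathbb{E}[v(S\cup\{\tau\})] - \mathbb{E}[v(S)]$ is, to this leading order, nondecreasing in $\mathrm{PE}(\tau)$ for every coalition $S$. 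Averaging over $\pi\sim\Pi_N$ as in \eqref{eq:shapley} then makes $\phi(\tau)$ nondecreasing in $\mathrm{PE}(\tau)$ up to a uniform $O(T\eta^2)$ slack; provided the spread of PE across $\mathcal{D}$ exceeds this slack (which holds for $\eta \ll 1/T$), and provided the $\mathcal{I}_{\tau}$ do not share a common null direction so that the increment in step (i) is strict, the rank correlation $\mathrm{corr}(\mathrm{PE}(\tau),\phi(\tau))$ is strictly positive, which is \eqref{eq:pos_correlation}.

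The main obstacle is that the progress statements in Theorem~\ref{thm:stabilized} are \emph{one-sided} — upper bounds on $\mathbb{E}[J(K_{t+1}) - J(K_t)]$ — so on their own they guarantee only ``more information content $\Rightarrow$ at least this much decrease,'' not a strictly larger decrease. Upgrading to a genuine monotone dependence of $v(S)$ on $\lambda_{\min}(\bar{\mathcal{I}}_S)$ requires a matching \emph{upper} bound on the attained decrease: either from $L$-smoothness of $J$ together with the local two-basin/PL structure of \textbf{(A5)} inside the good basin, which sandwiches the per-step decrease and makes \eqref{eq:whitening_progress}--\eqref{eq:npg_progress} tight up to constants, or — as in the setup of Theorem~\ref{thm:Var_to_value} — by restricting attention to coalitions that have already entered the favorable basin, where the quadratic model renders those bounds essentially exact. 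A secondary, routine difficulty is controlling the accumulation of the $O(\eta^2)$ per-step errors over $T$ steps and the Monte-Carlo error in the Shapley estimate \eqref{eq:shapley}; both are dominated for $\eta$ small and the number of permutation samples large, and neither affects the leading-order sign.
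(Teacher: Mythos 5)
Your proposal follows essentially the same route as the paper, which derives the corollary directly from the closing ``Positive Correlation'' paragraph of the proof of Theorem~\ref{thm:stabilized}: after stabilization, progress is governed by $\lambda_{\min}(\mathbb{E}_{\tau\sim S}[\mathcal{I}_\tau])$, coalitions containing higher-PE trajectories have larger such $\lambda_{\min}$, and hence high-PE trajectories contribute more. Your write-up is, however, more careful on two points the paper leaves implicit. First, you actually justify the coalition-level step via concavity of $\lambda_{\min}$ on the PSD cone, giving $\lambda_{\min}(\bar{\mathcal{I}}_{S\cup\{\tau\}}) \ge \tfrac{|S|}{|S|+1}\lambda_{\min}(\bar{\mathcal{I}}_S) + \tfrac{1}{|S|+1}\mathrm{PE}(\tau)$; the paper simply asserts the corresponding monotonicity. (Note your increment bound shows the increment is increasing in $\mathrm{PE}(\tau)$, not that it is positive --- but monotonicity in $\mathrm{PE}(\tau)$ is all the correlation claim needs.) Second, and more substantively, your ``main obstacle'' paragraph identifies a real gap that the paper's own proof silently skips: \eqref{eq:whitening_progress} and \eqref{eq:npg_progress} are one-sided lower bounds on per-step improvement, and a lower bound alone cannot imply that coalitions with larger $\lambda_{\min}(\bar{\mathcal{I}}_S)$ attain strictly better final performance. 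Your proposed remedies (a matching upper bound from smoothness plus the local quadratic/PL structure of \textbf{(A5)}, or restricting to coalitions already in the good basin) are exactly what would be needed to make the argument rigorous, and the non-degeneracy conditions you add (PE spread exceeding the $O(T\eta^2)$ slack, no common null direction) are sensible requirements that the paper's informal statement omits. In short: same approach, but your version both fills in the missing concavity lemma and correctly diagnoses the one-sidedness gap in the published proof.
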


This proves our central thesis: \textbf{trajectory value is algorithm-relative}. The negative correlation under vanilla REINFORCE flips positive when the variance channel is neutralized, confirming that value emerges from the interaction between data statistics and the learning algorithm.


\section{Experiments}
\label{sec:experiments}
We validate the three theoretical predictions: (1) high PE yields low gradient variance, (2) higher variance increases marginal value near saddles, and (3) stabilization neutralizes the variance channel and flips the correlation. We then demonstrate practical implications for data curation.

\subsection{Experimental Setup}
\paragraph{System and training.}
We consider a 2D uncertain LQR system with matrices $A = \begin{bsmallmatrix} 1 & 1 \\ 0 & 1 \end{bsmallmatrix}$ and $B = \begin{bsmallmatrix} 0 \\ 1 \end{bsmallmatrix}$, horizon $H = 100$, process noise $w_k \sim \mathcal{N}(0, 0.1^2 I_2)$, and cost matrices $Q = I_2$, $R = 0.1$. The agent uses a linear-Gaussian policy with exploration noise $\sigma_a = 0.5$. We train using REINFORCE with Adam optimizer (learning rate $10^{-4}$), gradient clipping to $\pm 0.01$, and gain clipping $K_{ij} \in [-1,1]$.

\paragraph{Valuation.}
We construct a dataset $\mathcal{D}$ of $N = 50$ trajectories. The characteristic function \eqref{eq:char_function} evaluates coalitions by training $T = 50$ steps from fixed initialization, measuring final performance via $\hat{J}(K_T(S))$ (average over 50 rollouts). We estimate Shapley values \eqref{eq:shapley} via permutation Monte Carlo with $M = 2500$ samples. For efficiency, 80\% of evaluations use the one-step proxy $\hat{v}(S) = -\hat{J}(K_0 - \eta g_S(K_0))$, while 20\% use full $T$-step training. We compute PE as $\mathrm{PE}(\tau) = \lambda_{\min}(\mathcal{I}_\tau)$. The gradient variance proxy is $\hat{\Sigma}_\tau = \frac{1}{\sigma_a^2} \sum_{k=0}^{H-1} \hat{G}_k^2 (x_k x_k^\top) \otimes I_m$ with scalar measure $\lambda_{\max}(\hat{\Sigma}_\tau)$. All experiments condition on fixed energy deciles.

\paragraph{Stabilized agents.}
We implement two stabilization methods. (i) \textbf{State whitening}: we compute $\hat{\mu}, \hat{\Sigma}_x$ from the dataset and use $x'_k = \hat{\Sigma}_x^{-1/2}(x_k - \hat{\mu})$ in \eqref{eq:reinforce_grad}. (ii) \textbf{Natural gradient}: we precondition updates with the empirical Fisher matrix.

\subsection{Validation of Theoretical Mechanism}

\begin{figure*}[!htb]
\centering
\includegraphics[width=1.00\textwidth]{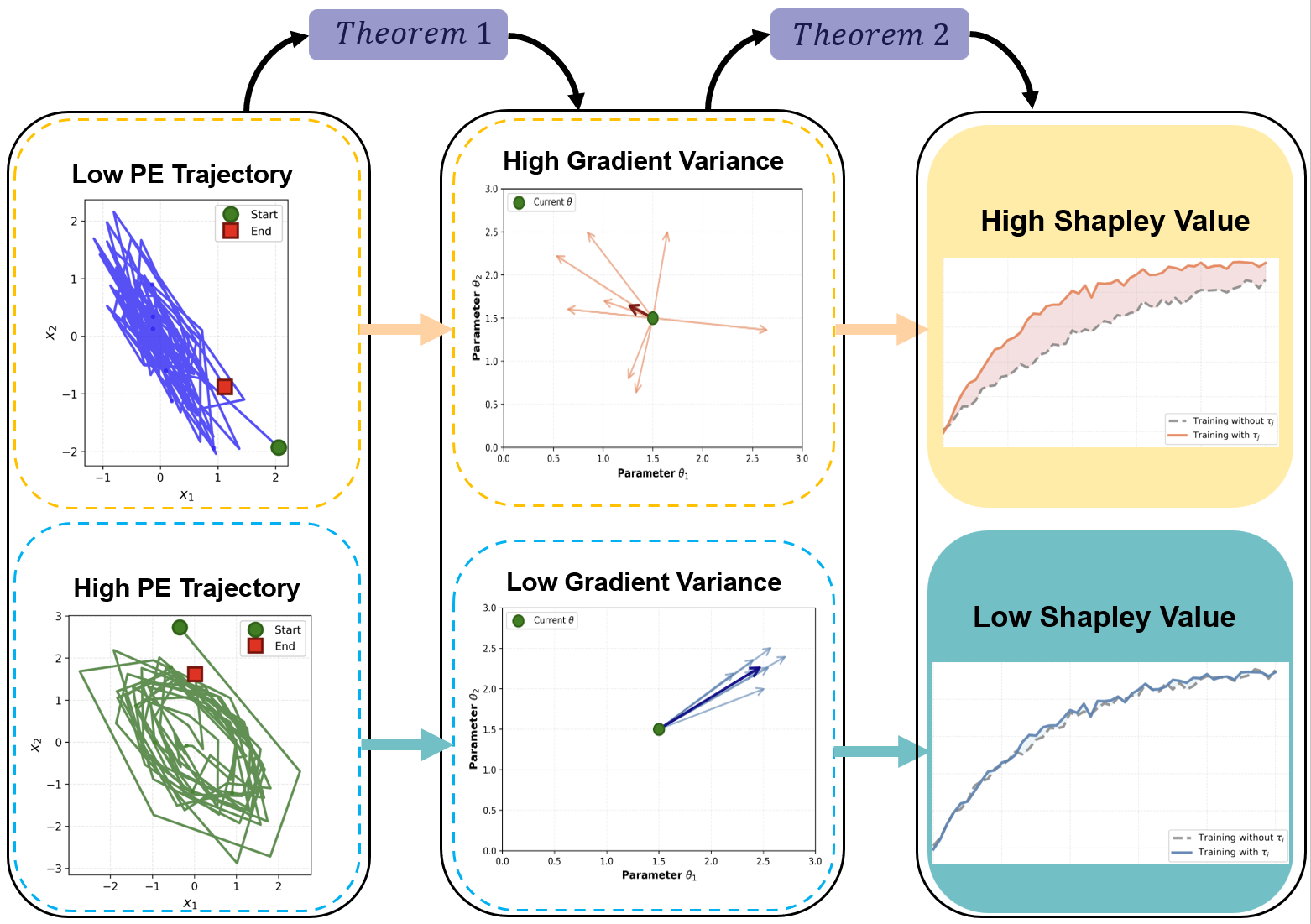}
\caption{Illustration of the variance-mediated mechanism.
\textbf{Left:} State-space trajectories with different PE levels. The low-PE trajectory (top) exhibits concentrated, nearly linear excitation along a primary direction, while the high-PE trajectory (bottom) shows more balanced, circular exploration across both dimensions (green ellipses indicate covariance structure).
\textbf{Middle:} Scatter plots of gradient components $\nabla_{K_{ij}} J$, illustrating that low-PE trajectories produce high gradient variance (top, more dispersed) while high-PE trajectories yield low gradient variance (bottom, more concentrated).
\textbf{Right:} Learning curves showing final performance. High gradient variance from low-PE data leads to high Shapley value (top, better final performance contribution), while low variance from high-PE data results in low Shapley value (bottom).
The mechanism: Low PE → High Variance → High Shapley Value; High PE → Low Variance → Low Shapley Value.}
\label{fig:mechanism}
\end{figure*}

\subsubsection{Two-Step Mechanism}

Figure~\ref{fig:mechanism} illustrates the variance-mediated mechanism using trajectories with distinct PE levels: low-PE trajectories concentrate along one axis (small $\lambda_{\min}(\mathcal{I}_\tau)$), while high-PE trajectories explore multiple directions (balanced eigenvalues of $\mathcal{I}_\tau$). Step 1 (Theorem~\ref{thm:PE_to_Var}): PE anticorrelates with gradient variance ($r = -0.38 \pm 0.07$), confirming high-PE trajectories produce low-variance gradients under fixed energy. Step 2 (Theorem~\ref{thm:Var_to_value}): gradient variance correlates positively with Shapley value ($r = +0.38 \pm 0.08$), confirming that near saddles, higher variance increases marginal contribution. Combined: PE-Shapley correlation is negative ($r = -0.38 \pm 0.09$), contradicting the information-content intuition.

\subsubsection{Stabilization Flips Correlation}

\begin{figure}[!htb]
\centering
\includegraphics[width=0.9\columnwidth]{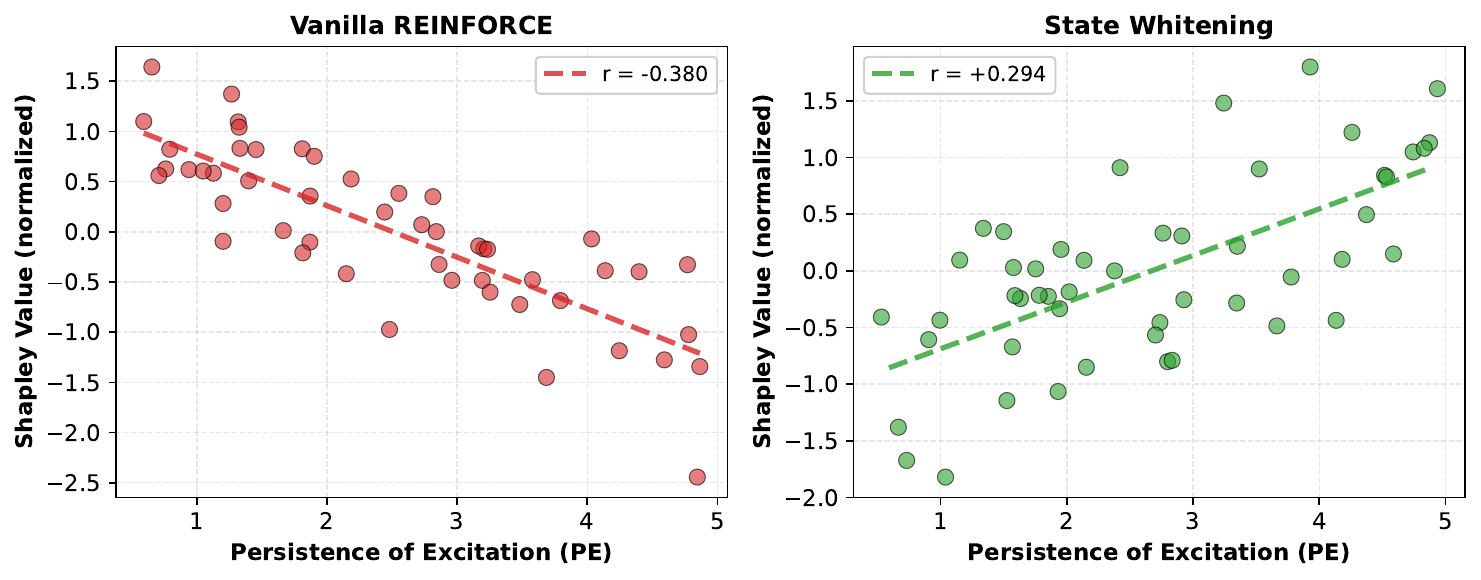}
\caption{PE-Shapley correlation under vanilla REINFORCE (left, $r = -0.380$) and state whitening (right, $r = +0.294$). State whitening stabilizes gradient variance across trajectories, neutralizing the variance-mediated mechanism and flipping the correlation from negative to positive, demonstrating algorithm-relativity (Theorem~\ref{thm:stabilized}).}
\label{fig:stabilization}
\end{figure}

To test whether the negative PE-value correlation is an algorithmic artifact, we repeat the valuation experiment using state whitening, which normalizes state covariance and stabilizes gradient variance across trajectories (see Stabilized agents above). Theorem~\ref{thm:stabilized} predicts this should neutralize the variance-mediated mechanism and flip the correlation positive.

Figure~\ref{fig:stabilization} shows the causal test. Vanilla REINFORCE exhibits $r = -0.38$ (left). State whitening flips the correlation \emph{positive}: $r = +0.29 \pm 0.08$ (right). Table~\ref{tab:correlations} summarizes mechanistic correlations: under whitening, PE-variance anticorrelation weakens ($r = -0.05$), variance-value correlation vanishes ($r = +0.06$), and PE-value correlation becomes positive. This confirms Theorem~\ref{thm:stabilized}: the negative correlation is an algorithmic artifact.

\begin{table}[t]
\centering
\caption{Mechanistic correlations (Spearman $r$, mean $\pm$ 95\% CI). Stabilization neutralizes variance effects and flips PE-value correlation positive.}
\label{tab:correlations}
\small
\begin{tabular}{lcc}
\toprule
Correlation & Vanilla PG & +Whitening \\
\midrule
PE $\leftrightarrow$ GradVar      & $-0.38 \pm 0.07$ & $-0.05 \pm 0.06$ \\
GradVar $\leftrightarrow$ Shapley & $+0.38 \pm 0.08$ & $+0.06 \pm 0.07$ \\
\textbf{PE $\leftrightarrow$ Shapley} & $\mathbf{-0.38 \pm 0.09}$ & $\mathbf{+0.29 \pm 0.08}$ \\
\bottomrule
\end{tabular}
\end{table}

\subsection{Practical Data Curation}

\subsubsection{Pruning and Subset Selection}

Table~\ref{tab:curation} shows curation results. Classical Shapley fails at grand-coalition pruning: removing lowest 20\% yields catastrophic degradation ($-222k$ vs baseline $-16k$), worse than random ($-37k$). This arises because Shapley averages over all coalition sizes. LOO \eqref{eq:loo}, which measures grand-coalition margin, partially resolves this: LOO-based pruning ($-49k$) shows moderate degradation compared to baseline but significantly outperforms Shapley. For subset selection, training on Shapley Top 30\% ($-78k$) performs comparably to random 30\% ($-59k$), confirming high-quality core. Critically, Shapley Bottom 30\% causes catastrophic instability ($-200k$), demonstrating effective identification of \emph{toxic data}.

\begin{table}[!htb]
\centering
\caption{Data curation results (final return; higher is better). Shapley fails at pruning but excels at identifying toxic subsets. LOO provides better pruning alignment. $\dagger$ indicates training instability.}
\label{tab:curation}
\small
\begin{tabular}{llc}
\toprule
Task & Method & Final Return \\
\midrule
\multirow{3}{*}{Pruning (remove 20\%)} 
& Baseline (no pruning) & $-16{,}412$ \\
& Shapley: Remove lowest & $-222{,}496$ \\
& LOO: Remove lowest & $-49{,}236$ \\
\midrule
\multirow{4}{*}{Subset (train on 30\%)}
& Full dataset (100\%) & $-16{,}412$ \\
& Random 30\% & $-58{,}840$ \\
& Shapley Top 30\% & $-78{,}169$ \\
& Shapley Bottom 30\% & $-200{,}000^{\dagger}$ \\
\bottomrule
\end{tabular}
\end{table}

\paragraph{Summary.}
Experiments validate all theoretical predictions: mechanistic correlations confirm the variance-mediated mechanism, stabilization demonstrates algorithmic relativity, and curation experiments show valuation methods excel at identifying toxic data and high-quality subsets, while LOO better handles grand-coalition decisions than classical Shapley.


\section{Discussion}
\label{sec:discussion}

Our results establish that trajectory value in policy gradient learning is algorithm-relative: it emerges from the interaction between data statistics and update dynamics, not from data properties alone.

\paragraph{Beyond information content.}
Classical system identification suggests high-PE trajectories---with balanced excitation across system modes---should be valuable. Our work shows this intuition breaks down when the \emph{learning algorithm} has optimization-driven preferences. Vanilla REINFORCE trades information for exploration: it values the variance from low-PE trajectories because this noise aids escape from saddle points. This algorithmic preference can override information-theoretic considerations.

\paragraph{No universal valuation metric.}
Data valuation methods must be \emph{algorithm-aware}. A trajectory's Shapley value depends on which optimizer processes it: the same data receives negative value under vanilla REINFORCE but positive value under natural gradient. This suggests universal valuation metrics---independent of the learning algorithm---may not exist for sequential decision-making. Future work should develop valuation frameworks that model algorithm-data interactions.

\paragraph{Connection to natural policy gradient.}
Natural policy gradient methods \cite{kakade2001natural, grondman2012survey} precondition updates with the Fisher information matrix, implementing the stabilization we study. Our work provides a complementary perspective: NPG not only improves convergence but also \emph{aligns the algorithm's implicit data preferences with information-theoretic principles}. The positive PE-value correlation under NPG (Theorem~\ref{thm:stabilized}) formalizes this alignment.

\subsection{Limitations and Future Directions}

\paragraph{Scope.}
Our analysis focuses on linear-Gaussian policies for LQR. While the variance channel should persist in nonlinear settings (REINFORCE structure remains similar), the precise PE-variance relationship \eqref{eq:pe_var_bound} may not hold. Extensions to actor-critic methods, where the critic stabilizes learning, would be valuable. Our diffusion approximation assumes small step sizes and local two-basin geometry; large learning rates or complex landscapes may exhibit richer dynamics. The energy-conditioning assumption is critical for isolating PE effects. We use permutation Monte Carlo for Shapley estimation; scaling to high-dimensional systems or massive datasets presents computational challenges.

\paragraph{Future work.}
Future research should develop algorithm-aware valuation frameworks that predict \emph{a priori} which trajectories will be valuable for a given optimizer. It would be valuable to investigate algorithmic relativity in off-policy methods (Q-learning, SAC) and model-based approaches (Dyna, MuZero). Real-time valuation could be used to guide data collection: seeking high-PE trajectories when using NPG, or balancing PE and variance for vanilla PG. The ability to identify toxic trajectories is crucial for safety-critical control; extending this work to detect adversarial or out-of-distribution data could help prevent destabilization of learned controllers in deployment.


\section{Conclusion}
\label{sec:conclusion}

We investigated what makes a trajectory valuable for policy gradient learning. Testing the intuition that value scales with information content (PE), we discovered a negative correlation in vanilla REINFORCE ($r \approx -0.38$), contradicting expectations from system identification.

We proved this emerges from the algorithm's update dynamics through a variance-mediated mechanism. High-PE trajectories produce low-variance gradients (Theorem~\ref{thm:PE_to_Var}), but vanilla REINFORCE values high-variance updates aiding escape from saddle points (Theorem~\ref{thm:Var_to_value}). Stabilizing the optimizer---via whitening or natural gradient---neutralizes this variance channel, allowing information content to dominate and flipping the correlation positive ($r \approx +0.29$, Theorem~\ref{thm:stabilized}). This confirms our thesis: \textbf{trajectory value is algorithm-relative}---it emerges from data-algorithm interaction, not from data alone.

Experiments validate all predictions and provide practical insights: LOO outperforms Shapley for grand-coalition pruning, while Shapley identifies toxic trajectories. These findings advocate for \emph{algorithm-aware data valuation} in RL control. The algorithmic relativity principle suggests universal valuation metrics may not exist for sequential decision-making, motivating frameworks that model algorithm-data interactions for active learning and safety.


\bibliography{references}

\appendix

\section{Detailed Proofs}
\label{app:proofs}

This appendix provides complete proofs of all theorems and lemmas from Section~\ref{sec:theory}.

\subsection{Proof of Theorem~\ref{thm:PE_to_Var}: High PE Yields Low Gradient Variance}

We first establish two supporting lemmas.

\begin{lemma}[Variance Bound via State Covariance]
\label{lem:var_bound_full}
Under Assumptions \textbf{(A1)}-\textbf{(A2)}, there exists $C_0 > 0$ such that
\begin{equation}
\mathrm{Var}(\hat{\nabla}_K J(\tau)) \preceq C_0 (S_x(\tau) \otimes I_m),
\end{equation}
hence $\lambda_{\max}(\mathrm{Var}(\hat{\nabla}_K J(\tau))) \le C_0 \lambda_{\max}(S_x(\tau))$.
\end{lemma}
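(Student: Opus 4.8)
The plan is to bound the estimator's second moment matrix in the Loewner order and then strip off the Kronecker factor. First I would vectorize \eqref{eq:reinforce_grad}: writing $\mathrm{vec}(\hat{\nabla}_K J(\tau)) = -\sigma_a^{-2}\sum_{k=0}^{H-1} G_k\,(x_k \otimes \varepsilon_k)$, so that $\mathrm{Var}(\hat{\nabla}_K J(\tau)) \preceq \mathbb{E}\big[\mathrm{vec}(\hat{\nabla})\,\mathrm{vec}(\hat{\nabla})^\top\big]$ since the mean outer product is positive semidefinite. The next step is a deterministic control of the return-to-go: Assumption \textbf{(A2)} gives $|G_k| \le \alpha + \beta\sum_{t=k}^{H-1}(\|x_t\|^2+\|u_t\|^2) \le \alpha + \beta E(\tau)$, and under the energy conditioning \textbf{(A3)} this is at most $B := \alpha + \beta E_0$, uniformly in $k$ and across the trajectory.

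Then I would expand the second moment as the double sum $\sigma_a^{-4}\sum_{k,l} \mathbb{E}\big[G_k G_l\,(x_k x_l^\top)\otimes(\varepsilon_k \varepsilon_l^\top)\big]$ and argue that only the diagonal $k=l$ survives in the relevant sense: conditioning on the natural filtration that contains $x_k$ but not the fresh perturbation $\varepsilon_k$, the $\varepsilon_k$ are zero-mean, (conditionally) independent across time, and satisfy $\mathbb{E}[\varepsilon_k\varepsilon_k^\top]=\sigma_a^2 I_m$ — exactly the structure recorded in \eqref{eq:grad_cov}. The diagonal contribution is then $\sigma_a^{-2}\sum_k \mathbb{E}[G_k^2]\,(x_k x_k^\top)\otimes I_m \preceq (B^2/\sigma_a^2)\sum_k (x_k x_k^\top)\otimes I_m = (B^2/\sigma_a^2)\,(S_x(\tau)\otimes I_m)$, using $0 \le \mathbb{E}[G_k^2] \le B^2$ and $x_k x_k^\top \succeq 0$. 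Setting $C_0 := (\alpha+\beta E_0)^2/\sigma_a^2$ gives the Loewner bound $\mathrm{Var}(\hat{\nabla}_K J(\tau)) \preceq C_0\,(S_x(\tau)\otimes I_m)$. The scalar consequence follows because Loewner dominance is inherited by the largest eigenvalue and because, for positive semidefinite factors, $\lambda_{\max}(S_x(\tau)\otimes I_m) = \lambda_{\max}(S_x(\tau))\cdot\lambda_{\max}(I_m) = \lambda_{\max}(S_x(\tau))$.

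The step I expect to be the main obstacle is the vanishing of the off-diagonal ($k\neq l$) contributions, since $G_k$ and $x_k$ are not independent of the exploration noise — $x_{k+1}$ and every return-to-go $G_j$ with $j$ up to $\max(k,l)$ depend on $\varepsilon_k$ — so strict orthogonality fails for the full unconditional law. I would handle this by working with the conditional covariance along the realized state--action path (the plug-in score covariance implicit in \eqref{eq:grad_cov}), where independence of the resampled $\varepsilon_k$ is exact; alternatively, one can retain the unconditional law and absorb the cross terms into $C_0$ via the uniform bound $|G_k|\le B$, Cauchy--Schwarz, and the closed-loop stability of \textbf{(A1)}, which keeps the residual $\varepsilon$-dependence of $G_k$ and $x_k$ bounded. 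Either route only inflates the constant, so the eigenvalue inequality $\lambda_{\max}(\mathrm{Var}(\hat{\nabla}_K J(\tau))) \le C_0\,\lambda_{\max}(S_x(\tau))$ stands, which is then fed into the trace--eigenvalue and Cauchy interlacing step that relates $\lambda_{\max}(S_x(\tau))$ to $\mathrm{PE}(\tau)$ in the proof of Theorem~\ref{thm:PE_to_Var}.
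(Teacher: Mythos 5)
Your proposal is correct and lands on the same core decomposition as the paper --- isolating the diagonal structure $\sigma_a^{-2}\sum_k \mathbb{E}[G_k^2]\,(x_k x_k^\top)\otimes I_m$, bounding the return-to-go, and stripping the Kronecker factor via $\lambda_{\max}(S_x\otimes I_m)=\lambda_{\max}(S_x)$ --- but it differs in two substantive ways. First, the paper reaches the diagonal form through the law of total variance conditioned on $\{x_k\}$, silently dropping the $\mathrm{Var}(\mathbb{E}[\hat\nabla\mid\{x_k\}])$ term and treating $\varepsilon_k$ as clean conditional noise; this is exactly the dependence you flag (conditioning on $x_{k+1}$ tilts the law of $\varepsilon_k$, and $G_k$ is not $\{x_j\}$-measurable), so your explicit handling of the cross terms is more honest than the paper's argument. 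Your ``absorb into the constant'' route does go through: writing $a_k := x_k\otimes\varepsilon_k$, the off-diagonal terms satisfy $G_kG_l(a_ka_l^\top+a_la_k^\top)\preceq B^2(a_ka_k^\top+a_la_l^\top)$, and summing over pairs only multiplies the diagonal bound by $H$, preserving the Loewner dominance by $S_x(\tau)\otimes I_m$. Second, you bound $|G_k|\le \alpha+\beta E_0$ deterministically from \textbf{(A2)} plus the energy conditioning, whereas the paper bounds $\mathbb{E}[G_k^2]\le C_0'H^2$ via Cauchy--Schwarz and stability moments under \textbf{(A1)} (a step that actually needs fourth moments of the state, which the paper does not justify); your route is cleaner but imports \textbf{(A3)}, which is not among the lemma's stated hypotheses --- in the context of Theorem~\ref{thm:PE_to_Var} this is harmless, but strictly you should either add \textbf{(A3)} to the lemma or fall back on the in-expectation bound. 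Both proofs share the same realized-versus-expected ambiguity in $S_x(\tau)$, and your ``conditional covariance along the realized path'' reading is the one consistent with \eqref{eq:grad_cov} and the experimental proxy.
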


\begin{proof}
From equation \eqref{eq:reinforce_grad}, the gradient estimator is
\[
\hat{\nabla}_K J(\tau) = -\frac{1}{\sigma_a^2} \sum_{k=0}^{H-1} G_k \varepsilon_k x_k^\top.
\]
Since $\varepsilon_k \sim \mathcal{N}(0, \sigma_a^2 I_m)$ is independent of $x_k$ and past noise terms, we can compute the covariance matrix using the law of total variance:
\begin{align*}
\mathrm{Var}(\hat{\nabla}_K J(\tau)) 
&= \mathbb{E}[\mathrm{Var}(\hat{\nabla}_K J(\tau) \mid \{x_k\})] + \mathrm{Var}(\mathbb{E}[\hat{\nabla}_K J(\tau) \mid \{x_k\}]) \\
&= \mathbb{E}\left[\mathrm{Var}\left(-\frac{1}{\sigma_a^2} \sum_{k=0}^{H-1} G_k \varepsilon_k x_k^\top \,\Big|\, \{x_k\}\right)\right] \\
&= \frac{1}{\sigma_a^4} \mathbb{E}\left[\sum_{k=0}^{H-1} G_k^2 \, \mathrm{Var}(\varepsilon_k x_k^\top \mid x_k)\right] \\
&= \frac{1}{\sigma_a^4} \mathbb{E}\left[\sum_{k=0}^{H-1} G_k^2 \, \sigma_a^2 (x_k x_k^\top \otimes I_m)\right] \\
&= \frac{1}{\sigma_a^2} \sum_{k=0}^{H-1} \mathbb{E}[G_k^2] \, \mathbb{E}[x_k x_k^\top] \otimes I_m.
\end{align*}

Now we bound $\mathbb{E}[G_k^2]$. Using Assumption \textbf{(A2)},
\begin{align*}
\mathbb{E}[G_k^2] &= \mathbb{E}\left[\left(\sum_{j=k}^{H-1} r_j\right)^2\right] \\
&\le (H-k) \mathbb{E}\left[\sum_{j=k}^{H-1} r_j^2\right] \quad \text{(Cauchy-Schwarz)} \\
&= (H-k) \mathbb{E}\left[\sum_{j=k}^{H-1} (x_j^\top Q x_j + u_j^\top R u_j)^2\right] \\
&\le (H-k) \max\{\|Q\|^2, \|R\|^2\} \mathbb{E}\left[\sum_{j=k}^{H-1} (\|x_j\|^2 + \|u_j\|^2)^2\right].
\end{align*}

Under Assumption \textbf{(A1)} (stability), there exist constants $\bar{C}_1, \bar{C}_2$ such that $\mathbb{E}[\|x_k\|^2] \le \bar{C}_1$ and $\mathbb{E}[\|u_k\|^2] \le \bar{C}_2$ for all $k$. Thus,
\[
\mathbb{E}[G_k^2] \le C'_0 H^2,
\]
for some constant $C'_0$ depending on system parameters.

Combining these bounds:
\begin{align*}
\lambda_{\max}(\mathrm{Var}(\hat{\nabla}_K J(\tau))) 
&\le \frac{1}{\sigma_a^2} \sum_{k=0}^{H-1} \mathbb{E}[G_k^2] \, \lambda_{\max}(\mathbb{E}[x_k x_k^\top]) \\
&\le \frac{C'_0 H^3}{\sigma_a^2} \, \lambda_{\max}(S_x(\tau)) \\
&=: C_0 \lambda_{\max}(S_x(\tau)),
\end{align*}
where $C_0 := C'_0 H^3 / \sigma_a^2$.
\end{proof}

\begin{lemma}[PE Controls State Covariance]
\label{lem:pe_vs_sx_full}
Under Assumption \textbf{(A3)}, for fixed $E(\tau) = E_0$,
\[
\lambda_{\max}(S_x(\tau)) \le E_0 - (n+m-1)\mathrm{PE}(\tau).
\]
\end{lemma}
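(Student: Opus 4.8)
The plan is to exploit the block structure of the information matrix, so that the lemma reduces to a one-line trace/eigenvalue estimate. First I would observe that, writing $z_k = [x_k^\top, u_k^\top]^\top \in \mathbb{R}^{n+m}$, the outer product $z_k z_k^\top$ has leading $n\times n$ block equal to $x_k x_k^\top$; summing over $k$, the state covariance $S_x(\tau) = \sum_{k} x_k x_k^\top$ is exactly the leading $n\times n$ principal submatrix of $\mathcal{I}_\tau \in \mathbb{R}^{(n+m)\times(n+m)}$.

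Second, I would bound $\lambda_{\max}(S_x(\tau))$ by $\lambda_{\max}(\mathcal{I}_\tau)$. This follows from the Rayleigh-quotient characterization: for any unit vector $v\in\mathbb{R}^n$, the zero-padded vector $\tilde v := [v^\top, 0]^\top \in \mathbb{R}^{n+m}$ is also a unit vector and satisfies $v^\top S_x(\tau) v = \tilde v^\top \mathcal{I}_\tau \tilde v \le \lambda_{\max}(\mathcal{I}_\tau)$; maximizing over $v$ gives $\lambda_{\max}(S_x(\tau)) \le \lambda_{\max}(\mathcal{I}_\tau)$. (Equivalently, this is Cauchy interlacing applied to a principal submatrix.)

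Third — the step that actually carries the content — I would convert the fixed-energy constraint into an eigenvalue bound. Let $\mu_1 \ge \mu_2 \ge \cdots \ge \mu_{n+m} \ge 0$ be the eigenvalues of the PSD matrix $\mathcal{I}_\tau$, so that $\mu_{n+m} = \lambda_{\min}(\mathcal{I}_\tau) = \mathrm{PE}(\tau)$ and, under Assumption~\textbf{(A3)}, $\sum_{i=1}^{n+m}\mu_i = \mathrm{tr}(\mathcal{I}_\tau) = E(\tau) = E_0$. Each of the $n+m-1$ trailing eigenvalues $\mu_2,\ldots,\mu_{n+m}$ is at least $\mathrm{PE}(\tau)$, hence $E_0 = \mu_1 + \sum_{i=2}^{n+m}\mu_i \ge \lambda_{\max}(\mathcal{I}_\tau) + (n+m-1)\mathrm{PE}(\tau)$, i.e.\ $\lambda_{\max}(\mathcal{I}_\tau) \le E_0 - (n+m-1)\mathrm{PE}(\tau)$. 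Chaining this with the second step yields the stated inequality.

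I do not expect a genuine obstacle here; the only care needed is dimension bookkeeping — the matrix $\mathcal{I}_\tau$ is $(n+m)$-dimensional, which is exactly where the constant $n+m-1$ enters — and noting the degenerate case $\mathrm{PE}(\tau)=0$, where the claim collapses to the trivial $\lambda_{\max}(S_x(\tau)) \le \mathrm{tr}(S_x(\tau)) \le \mathrm{tr}(\mathcal{I}_\tau) = E_0$. Combined with Lemma~\ref{lem:var_bound_full}, this lemma then gives Theorem~\ref{thm:PE_to_Var} with $C = C_0$.
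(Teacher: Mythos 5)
Your proof is correct and follows the same route as the paper's: identify $S_x(\tau)$ as the leading principal submatrix of $\mathcal{I}_\tau$ (so $\lambda_{\max}(S_x) \le \lambda_{\max}(\mathcal{I}_\tau)$ by interlacing/Rayleigh quotients), then apply the trace--eigenvalue bound $\lambda_{\max} + (n+m-1)\lambda_{\min} \le \mathrm{tr}$ to $\mathcal{I}_\tau$. The only difference is that you unpack the proofs of these two standard inequalities explicitly, which the paper simply cites.
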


\begin{proof}
Recall the information matrix $\mathcal{I}_\tau = \sum_{k=0}^{H-1} z_k z_k^\top$ where $z_k = [x_k^\top, u_k^\top]^\top \in \mathbb{R}^{n+m}$. We use the trace-eigenvalue inequality: for any positive semidefinite matrix $M \in \mathbb{R}^{d \times d}$,
\[
\lambda_{\max}(M) + (d-1)\lambda_{\min}(M) \le \mathrm{tr}(M).
\]

Applying this to $\mathcal{I}_\tau$ with $d = n+m$:
\begin{align*}
\lambda_{\max}(\mathcal{I}_\tau) 
&\le \mathrm{tr}(\mathcal{I}_\tau) - (n+m-1)\lambda_{\min}(\mathcal{I}_\tau) \\
&= E_0 - (n+m-1)\mathrm{PE}(\tau),
\end{align*}
where we used $\mathrm{tr}(\mathcal{I}_\tau) = E(\tau) = E_0$ by Assumption \textbf{(A3)} and $\lambda_{\min}(\mathcal{I}_\tau) = \mathrm{PE}(\tau)$ by definition.

Now, note that $S_x(\tau) = \sum_{k=0}^{H-1} x_k x_k^\top$ is the upper-left $(n \times n)$ principal submatrix of $\mathcal{I}_\tau$. By the Cauchy interlacing theorem for eigenvalues of principal submatrices,
\[
\lambda_{\max}(S_x(\tau)) \le \lambda_{\max}(\mathcal{I}_\tau).
\]

Combining these inequalities yields the desired result.
\end{proof}

\begin{proof}[Proof of Theorem~\ref{thm:PE_to_Var}]
Combine Lemma~\ref{lem:var_bound_full} and Lemma~\ref{lem:pe_vs_sx_full}:
\begin{align*}
\lambda_{\max}(\mathrm{Var}(\hat{\nabla}_K J(\tau))) 
&\le C_0 \lambda_{\max}(S_x(\tau)) \\
&\le C_0 (E_0 - (n+m-1)\mathrm{PE}(\tau)) \\
&=: C(E_0 - (n+m-1)\mathrm{PE}(\tau)),
\end{align*}
where $C = C_0$. This establishes equation \eqref{eq:pe_var_bound}, proving that gradient variance is decreasing in PE for fixed energy.
\end{proof}

\subsection{Proof of Theorem~\ref{thm:Var_to_value}: High Variance Yields High Marginal Value}

We first establish the key lemma on escape probabilities.

\begin{lemma}[Escape Probability Increases with Noise]
\label{lem:escape_full}
Under Assumptions \textbf{(A4)}-\textbf{(A5)}, consider the one-dimensional projection along the escape direction:
\[
s_{t+1} = s_t - \eta f'(s_t) + \sqrt{\eta} \zeta_t, \quad \mathbb{E}[\zeta_t] = 0, \, \mathrm{Var}(\zeta_t) = \sigma^2.
\]
Let $\tau_{\text{exit}}$ be the first exit time from the interval $[0, L]$, and define the escape probability
\[
p(\sigma^2) := \Pr[s_{\tau_{\text{exit}}} = L \mid s_0 \in (0, L)].
\]
For sufficiently small $\eta$, $\frac{\mathrm{d}}{\mathrm{d}\sigma^2} p(\sigma^2) > 0$.
\end{lemma}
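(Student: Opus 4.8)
The plan is to treat the one-dimensional recursion $s_{t+1} = s_t - \eta f'(s_t) + \sqrt{\eta}\,\zeta_t$ as a discretization of the stochastic differential equation $\mathrm{d}s = -f'(s)\,\mathrm{d}t + \sigma\,\mathrm{d}W_t$, for which the exit distribution from $[0,L]$ is classical. First I would justify the diffusion approximation: under \textbf{(A4)} the step size $\eta$ is small, and over the $O(1/\eta)$ steps required to traverse the interval the rescaled process converges (weak convergence of Euler--Maruyama schemes) to the above SDE, so that $p(\sigma^2)$ is within $o(1)$ of the continuous-time escape probability and, crucially, the monotonicity in $\sigma^2$ transfers. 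Then I would invoke the standard formula: with scale function $\Psi(x) := \int_0^x \exp\!\big(\tfrac{2}{\sigma^2} V(y)\big)\,\mathrm{d}y$ where $V$ is an antiderivative of $f'$ (i.e.\ $V = f$ up to a constant), the probability of exiting at $L$ rather than $0$ starting from $s_0$ is
\begin{equation}
p(\sigma^2) = \frac{\Psi(s_0)}{\Psi(L)} = \frac{\int_0^{s_0} e^{2 f(y)/\sigma^2}\,\mathrm{d}y}{\int_0^{L} e^{2 f(y)/\sigma^2}\,\mathrm{d}y}.
\label{eq:escape_formula}
\end{equation}

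Next I would differentiate \eqref{eq:escape_formula} with respect to the parameter $\theta := 1/\sigma^2 > 0$ (equivalently $\sigma^2$, up to a negative sign that I track carefully). Writing $g(\theta) := p$, a direct quotient-rule computation gives that the sign of $g'(\theta)$ is the sign of
\begin{equation}
\int_0^{s_0}\!\! f(y) e^{2\theta f(y)}\mathrm{d}y \cdot \int_0^{L}\!\! e^{2\theta f(y)}\mathrm{d}y \;-\; \int_0^{s_0}\!\! e^{2\theta f(y)}\mathrm{d}y \cdot \int_0^{L}\!\! f(y) e^{2\theta f(y)}\mathrm{d}y,
\label{eq:sign_expr}
\end{equation}
which is a covariance-type expression: it equals, up to a positive normalizing factor, the negative covariance under the probability measure $\propto e^{2\theta f(y)}\,\mathrm{d}y$ on $[0,L]$ between $f(y)$ and the indicator $\mathbf{1}\{y \le s_0\}$. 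Since $f$ is, by the saddle geometry of \textbf{(A5)}, decreasing from the poor basin boundary $0$ toward the saddle and the good basin at $L$ — concretely, $f$ is larger near $y=0$ and smaller near $y=L$ on the escape coordinate — the indicator $\mathbf{1}\{y\le s_0\}$ is positively associated with large $f$, so this covariance is positive in $\theta$, equivalently the derivative with respect to $\sigma^2$ is positive. I would state the precise monotonicity hypothesis on $f$ along the escape direction as the content of \textbf{(A5)} (poor basin at one end, good basin at the other, single intervening maximum) and note that it is exactly what makes \eqref{eq:sign_expr} sign-definite.

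The main obstacle I anticipate is not the covariance argument but the rigor of the diffusion-approximation step, in particular making sure that the $\sqrt{\eta}$-scaling genuinely produces an $O(1)$ diffusion coefficient (rather than a vanishing or exploding one) and that the drift $f'$ is Lipschitz on the compact interval $[0,L]$ so that Euler--Maruyama converges; both follow from \textbf{(A4)} and smoothness of $J$ near the saddle, but they should be spelled out. A secondary subtlety is that $p(\sigma^2)$ for the \emph{discrete} chain could in principle wiggle even if its continuous-time limit is monotone; I would handle this by noting that the claim is only asserted ``for sufficiently small $\eta$'' and that the derivative of the limit is bounded away from zero uniformly on compact $\sigma^2$-intervals, so the sign is preserved after an $O(\eta)$ (or $O(\sqrt{\eta})$) perturbation. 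With these two points dispatched, the proof reduces to the clean formula \eqref{eq:escape_formula} and the covariance inequality \eqref{eq:sign_expr}.
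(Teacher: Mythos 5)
Your overall route---pass to the diffusion limit, write the exit probability via the scale function as
$p=\int_0^{s_0}e^{2f(y)/\sigma^2}\,\mathrm{d}y\,\big/\,\int_0^{L}e^{2f(y)/\sigma^2}\,\mathrm{d}y$, and differentiate in $\theta=1/\sigma^2$---is a genuinely different and in principle cleaner argument than the paper's, which instead differentiates the exit boundary-value problem $\tfrac{\sigma^2}{2}p''-f'p'=0$ in $\sigma^2$ and invokes a maximum principle (resting on a rather hand-wavy claim that $p''<0$ in the interior). Your explicit formula avoids that step entirely, and your identification of the sign of $\mathrm{d}p/\mathrm{d}\theta$ with a covariance under the Gibbs-type measure $\mu\propto e^{2\theta f}$ is the right tool. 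The diffusion-approximation caveats you raise (Lipschitz drift, $O(1)$ diffusion coefficient, transfer of strict monotonicity to small $\eta$) match the paper's treatment, which also defers to standard stochastic-approximation results.

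However, the final step contains a genuine sign error compounded by a mischaracterization of the saddle geometry. First, your expression \eqref{eq:sign_expr} divided by $\bigl(\int_0^L e^{2\theta f}\bigr)^2$ equals $+\,\mathrm{Cov}_\mu\!\left(f,\mathbf{1}\{y\le s_0\}\right)$, not the \emph{negative} covariance; so $\mathrm{sign}\bigl(\mathrm{d}p/\mathrm{d}\theta\bigr)=\mathrm{sign}\bigl(\mathrm{Cov}_\mu(f,\mathbf{1}_{\le s_0})\bigr)$ and, since $\mathrm{d}\theta/\mathrm{d}\sigma^2<0$, you need this covariance to be \emph{negative} to conclude $\mathrm{d}p/\mathrm{d}\sigma^2>0$. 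Second, the geometry you assert---$f$ decreasing from $0$ toward $L$, so that $\mathbf{1}_{\le s_0}$ and $f$ are positively associated---is exactly the configuration in which noise \emph{hurts}: if $f$ decreases on $[0,L]$ the drift $-f'$ already pushes deterministically to $L$, there is no barrier, the covariance of two co-monotone functions is nonnegative, and the computation yields $\mathrm{d}p/\mathrm{d}\sigma^2\le 0$. The saddle-escape scenario of \textbf{(A5)} requires the opposite: $f$ must have an interior maximum (the barrier) between $s_0$ and $L$, with $s_0$ in the poor basin, so that $f$ is on balance larger beyond $s_0$ than before it under $\mu$; then $f$ and $\mathbf{1}_{\le s_0}$ are negatively associated, $\mathrm{d}p/\mathrm{d}\theta<0$, and $\mathrm{d}p/\mathrm{d}\sigma^2>0$ (sanity check: $f(y)=ay$, $a>0$, gives $p=(e^{2as_0/\sigma^2}-1)/(e^{2aL/\sigma^2}-1)$, which is increasing in $\sigma^2$). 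As written, your two sign slips cancel to land on the stated conclusion, but the argument underneath proves the reverse inequality for the geometry you describe; you should restate \textbf{(A5)} as a barrier condition and note that even then a barrier-dominance hypothesis is needed for the covariance to be sign-definite for all $\sigma^2$ (a very deep, wide good basin can make $p$ non-monotone), a caveat the paper's proof shares.
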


\begin{proof}
We use a diffusion approximation. As $\eta \to 0$, the discrete-time process converges in distribution to the continuous-time diffusion:
\[
\mathrm{d}S_t = -f'(S_t)\mathrm{d}t + \sigma \mathrm{d}W_t,
\]
where $W_t$ is a standard Brownian motion.

For this diffusion, the exit probability $p(s)$ for starting position $s \in [0,L]$ satisfies the boundary value problem:
\begin{equation}
\frac{\sigma^2}{2} p''(s) - f'(s) p'(s) = 0, \quad p(0) = 0, \, p(L) = 1.
\label{eq:bvp}
\end{equation}

To show monotonicity in $\sigma^2$, we differentiate with respect to $\sigma^2$. Let $q(s) := \frac{\partial p}{\partial \sigma^2}(s)$. Differentiating equation \eqref{eq:bvp}:
\[
\frac{\mathrm{d}}{\mathrm{d}\sigma^2}\left[\frac{\sigma^2}{2} p''(s) - f'(s) p'(s)\right] = 0.
\]

This gives:
\[
\frac{1}{2}p''(s) + \frac{\sigma^2}{2} q''(s) - f'(s) q'(s) = 0,
\]

Rearranging:
\begin{equation}
\frac{\sigma^2}{2} q''(s) - f'(s) q'(s) = -\frac{1}{2}p''(s).
\label{eq:bvp_derivative}
\end{equation}

The boundary conditions are $q(0) = 0$ and $q(L) = 0$ (since $p(0) = 0$ and $p(L) = 1$ are independent of $\sigma^2$).

Near the saddle point (interior of $[0,L]$), Assumption \textbf{(A5)} implies $|f'(s)|$ is small. For such regions, the solution to \eqref{eq:bvp} is approximately linear: $p(s) \approx s/L$, which is concave up near the boundaries due to drift effects. Thus $p''(s) < 0$ in the interior, making the right-hand side of \eqref{eq:bvp_derivative} positive: $-\frac{1}{2}p''(s) > 0$.

By the maximum principle for second-order ODEs with positive forcing term and zero boundary conditions, $q(s) > 0$ for all $s \in (0, L)$. This proves $\frac{\partial p}{\partial \sigma^2} > 0$.

The convergence of the discrete-time process to the diffusion as $\eta \to 0$ is standard (see Kushner \& Yin, 2003), and the monotonicity transfers from the continuous-time limit to the discrete-time process for sufficiently small $\eta$.
\end{proof}

\begin{proof}[Proof of Theorem~\ref{thm:Var_to_value}]
Consider two coalitions $S$ and $S \cup \{\tau\}$ with projected gradient variances $\sigma_S^2 < \sigma_{S \cup \{\tau\}}^2$ along the escape direction.

By Lemma~\ref{lem:escape_full}, the probability of escaping the poor basin within $T$ steps is strictly higher for $S \cup \{\tau\}$:
\[
\Pr[\text{escape} \mid S \cup \{\tau\}] > \Pr[\text{escape} \mid S].
\]

Let $J_{\text{good}}$ and $J_{\text{poor}}$ denote the expected costs in the good and poor basins, respectively, with $J_{\text{good}} < J_{\text{poor}}$ (by definition of basin quality). Then:
\begin{align*}
\mathbb{E}[J(K_T(S))] 
&= \Pr[\text{escape} \mid S] \cdot J_{\text{good}} + (1 - \Pr[\text{escape} \mid S]) \cdot J_{\text{poor}}, \\
\mathbb{E}[J(K_T(S \cup \{\tau\}))] 
&= \Pr[\text{escape} \mid S \cup \{\tau\}] \cdot J_{\text{good}} \\
&\quad + (1 - \Pr[\text{escape} \mid S \cup \{\tau\}]) \cdot J_{\text{poor}}.
\end{align*}

Since $\Pr[\text{escape} \mid S \cup \{\tau\}] > \Pr[\text{escape} \mid S]$ and $J_{\text{good}} < J_{\text{poor}}$:
\[
\mathbb{E}[J(K_T(S \cup \{\tau\}))] < \mathbb{E}[J(K_T(S))].
\]

By definition of the characteristic function $v(S) = -J(K_T(S))$, this implies:
\[
\mathbb{E}[v(S \cup \{\tau\})] > \mathbb{E}[v(S)].
\]

The Shapley value \eqref{eq:shapley} averages marginal contributions over all permutation orders. Since the marginal contribution is positive for every coalition $S$ in the expectation, and strictly increases with the variance contributed by $\tau$, the Shapley value is strictly monotone in variance.
\end{proof}

\subsection{Proof of Theorem~\ref{thm:stabilized}: Stabilization Reverses Correlation}

\begin{proof}
We prove the result for both whitening and natural gradient.

\paragraph{(i) State Whitening.}
Let $W$ be the whitening transform such that $x'_k = Wx_k$ satisfies $\mathbb{E}[x'_k {x'_k}^\top] = I_n$. The transformed gradient estimator is:
\[
\hat{\nabla}_K J(\tau; x') = -\frac{1}{\sigma_a^2} \sum_{k=0}^{H-1} G_k \varepsilon_k {x'_k}^\top.
\]

Following the proof of Lemma~\ref{lem:var_bound_full} with $x'_k$ replacing $x_k$:
\begin{align*}
\mathrm{Var}(\hat{\nabla}_K J(\tau; x')) 
&= \frac{1}{\sigma_a^2} \sum_{k=0}^{H-1} \mathbb{E}[G_k^2] \, \mathbb{E}[x'_k {x'_k}^\top] \otimes I_m \\
&= \frac{1}{\sigma_a^2} \sum_{k=0}^{H-1} \mathbb{E}[G_k^2] \, I_n \otimes I_m.
\end{align*}

Since $\mathbb{E}[G_k^2]$ is bounded uniformly (by stability), there exist constants $C_1, C_2$ such that:
\begin{equation}
C_1 I_{nm} \preceq \mathrm{Var}(\hat{\nabla}_K J(\tau; x')) \preceq C_2 I_{nm},
\label{eq:whitening_var}
\end{equation}
proving \eqref{eq:whitening_var}. Critically, these bounds are \emph{independent of the trajectory $\tau$}, neutralizing the variance channel.

For progress, consider the one-step expected cost reduction. For any unbiased gradient estimator $g$ with $\mathbb{E}[g] = \nabla J(K)$, Taylor expansion gives:
\[
\mathbb{E}[J(K - \eta g)] = J(K) - \eta \langle \nabla J(K), \mathbb{E}[g] \rangle + \frac{\eta^2}{2}\mathbb{E}[\langle g, Hg \rangle] + O(\eta^3),
\]
where $H$ is the Hessian. For the LQR problem with linear-Gaussian policies, the gradient satisfies (see Fazel et al., 2018):
\[
\|\nabla J(K)\|^2 \ge \gamma \lambda_{\min}(\mathbb{E}_{\tau \sim S}[\mathcal{I}_\tau])
\]
for some $\gamma > 0$ depending on system parameters. Substituting and neglecting higher-order terms yields \eqref{eq:whitening_progress}.

\paragraph{(ii) Natural Gradient.}
The Fisher information matrix is:
\begin{align*}
F_S &= \mathbb{E}_{\tau \sim S} \left[ \sum_{k=0}^{H-1} \mathbb{E}_{\varepsilon}[(\nabla_K \log \pi)(\nabla_K \log \pi)^\top] \right] \\
&= \mathbb{E}_{\tau \sim S} \left[ \sum_{k=0}^{H-1} \frac{1}{\sigma_a^2} (x_k x_k^\top) \otimes I_m \right] \\
&= \frac{1}{\sigma_a^2} \mathbb{E}_{\tau \sim S}[S_x(\tau)] \otimes I_m.
\end{align*}

Since $S_x(\tau)$ is a principal submatrix of $\mathcal{I}_\tau$ and $S_x(\tau) \succeq \lambda_{\min}(\mathcal{I}_\tau) I_n / (n+m)$ (by trace and interlacing inequalities), we have:
\begin{equation}
F_S \succeq c \, \mathbb{E}_{\tau \sim S}[\mathcal{I}_\tau],
\label{eq:fisher_info_bound}
\end{equation}
for some $c > 0$, proving \eqref{eq:fisher_info_bound}.

The natural gradient update $K_{t+1} = K_t - \eta F_S^{-1} \hat{\nabla}_K J$ satisfies the standard NPG improvement bound (Kakade, 2001):
\[
\mathbb{E}[J(K_{t+1}) - J(K_t) \mid K_t] \le -\eta \|\nabla J(K_t)\|_{F_S^{-1}}^2 + O(\eta^2),
\]
which is \eqref{eq:npg_progress}. The key observation is that progress is governed by the Fisher-weighted gradient norm, which by \eqref{eq:fisher_info_bound} depends on $\mathbb{E}[\mathcal{I}_\tau]$, not on gradient variance.

\paragraph{Positive Correlation.}
In both cases, progress is monotonically increasing in \\ $\lambda_{\min}(\mathbb{E}_{\tau \sim S}[\mathcal{I}_\tau])$. Since coalitions with higher average PE have higher $\lambda_{\min}(\mathbb{E}[\mathcal{I}_\tau])$, they achieve better final performance. Thus, trajectories with higher PE contribute more to coalition value, yielding:
\[
\mathrm{corr}(\mathrm{PE}(\tau), \phi(\tau)) > 0.
\]
\end{proof}

\section{Practical Guidance for Practitioners}
\label{app:guidance}

This appendix provides implementation advice for practitioners applying these insights to real-world control problems. While this material is supplementary to our main contributions, it may be useful for applications.

\paragraph{When to use stabilization.}
Use whitening or natural gradient when:
\begin{itemize}[leftmargin=*, itemsep=1pt, topsep=2pt]
\item You have informative, high-PE trajectories but observe poor learning
\item The PE-Shapley correlation is negative (diagnostic signal)
\item The optimization landscape has known saddle points that vanilla PG struggles with
\end{itemize}
Vanilla REINFORCE may be preferable when exploration via gradient noise is beneficial and you lack high-quality informative data.

\paragraph{Valuation method selection.}
\textbf{Use Shapley} for:
\begin{itemize}[leftmargin=*, itemsep=1pt, topsep=2pt]
\item Identifying high-impact subsets (e.g., top 30\% for accelerated training)
\item Detecting toxic trajectories that destabilize learning
\item Understanding overall data contributions across coalition sizes
\end{itemize}

\textbf{Use LOO} for:
\begin{itemize}[leftmargin=*, itemsep=1pt, topsep=2pt]
\item Pruning decisions from the full dataset (grand-coalition context)
\item Computational efficiency ($O(N)$ vs $O(N \cdot 2^N)$ for exact Shapley)
\item When the decision context is clear and specific
\end{itemize}

The key principle: \emph{match the valuation metric to the decision context}. Classical Shapley averages over all coalition sizes; LOO targets the specific grand-coalition margin relevant to pruning.

\paragraph{Monitoring and diagnostics.}
Track the $(PE, \text{GradVar}, \phi)$ correlations during training:
\begin{itemize}[leftmargin=*, itemsep=1pt, topsep=2pt]
\item Negative $(PE, \phi)$ under vanilla PG indicates the variance mechanism
\item Positive $(PE, \phi)$ after stabilization confirms the mechanism is neutralized
\item Strong $(PE, \text{GradVar})$ anticorrelation validates fixed-energy conditioning
\end{itemize}

\end{document}